\def\BibTeX{{\rm B\kern-.05em{\sc i\kern-.025em b}\kern-.08em
  T\kern-.1667em\lower.7ex\hbox{E}\kern-.125emX}}
\newtheorem{definition}{Definition}
\newtheorem{theorem}{Theorem}
\newtheorem{remark}{Remark}
\newtheorem{lemma}{Lemma}
\newtheorem{corollary}{Corollary}
\definecolor{darkgreen}{RGB}{0,127,0}
\DeclarePairedDelimiter\floor{\lfloor}{\rfloor}
\title{\LARGE \bf
Convergence Guarantee of Dynamic Programming for LTL Surrogate Reward
}
\author{Zetong Xuan and Yu Wang
\thanks{Zetong Xuan is with Department of Mechanical \& Aerospace Engineering,
        University of Florida, Gainesville, FL 32611, USA}
\thanks{Yu Wang is with Department of Mechanical \& Aerospace Engineering,
        University of Florida, Gainesville, FL 32611, USA
}
}
\begin{document}

\maketitle
\thispagestyle{empty}
\pagestyle{empty}

\begin{abstract}
Linear Temporal Logic (LTL) is a formal way of specifying complex objectives for planning problems modeled as Markov Decision Processes (MDPs). 
The planning problem aims to find the optimal policy that maximizes the satisfaction probability of the LTL objective. One way to solve the planning problem is to use the surrogate reward with two discount factors and dynamic programming, which bypasses the graph analysis used in traditional model-checking. 
The surrogate reward is designed such that its value function represents the satisfaction probability. 
However, in some cases where one of the discount factors is set to $1$ for higher accuracy, the computation of the value function using dynamic programming is not guaranteed. 
This work shows that a multi-step contraction always exists during dynamic programming updates, guaranteeing that the approximate value function will converge exponentially to the true value function. Thus, the computation of satisfaction probability is guaranteed. 
\end{abstract}


\section{Introduction}
\label{sec:intro}

Modern autonomous systems need to solve planning problems for complex rule-based tasks that
are usually expressible by linear temporal logic (LTL)\cite{pnueli1977temporala}. 
LTL is a symbolic language to describe high-level tasks like reaching a sequence of goals or ordering a set of events. 
The planning problem with the LTL objective is to find the optimal policy that maximizes the probability of satisfying the given LTL objective. 
This problem can be formally treated as a quantitative model-checking problem \cite{baier2008principles} when the environment is modeled as MDPs with known transition probability. That is, given an MDP and an LTL objective, find the maximum satisfaction probability within all possible policies. 

Although an LTL objective can be complex, its maximal satisfaction probability and optimal policy can be computed by quantitative (or probabilistic) model checking \cite{baier2008principles,fainekos2005temporal,kressgazit2009temporallogicbased} via reachability. 
First, a specific set of states is selected so the reachability probability to this set equals the maximum satisfaction probability of the LTL objective. 
The set of states shall be identified by graph analysis, such as a depth-first search, on the product MDP. 
The product MDP is a product of the original MDP and an $\omega$-regular automaton, encoding all the information necessary for quantitative model-checking. 
Then, dynamic programming or linear programming shall be applied to compute the reachability probability of the set. 

An alternative approach finds the optimal policy via a surrogate reward function without the graph analysis, thus is more generalizable to model-free reinforcement learning (RL) \cite{sadigh2014learning, hasanbeig2019reinforcement, hahn2020faithful, bozkurt2020control}. The surrogate reward is a reward function automatically derived from the LTL objective, which yields a value function representing the satisfaction probability of the LTL objective. The satisfaction probability can be computed using dynamic programming. 
Dynamic programming iteratively updates an approximate value function by the Bellman equation. 
Ideally, the approximate value function will converge to the value function during updates. 
Furthermore, the finding of the optimal policy and the corresponding maximum satisfaction probability can be done by policy or value iteration. 
Another advantage of using surrogate reward can be a smooth transformation into model-free RL to deal with situations when the MDP transitions are unknown. In this case, the two-phase model-checking approach has to be transformed into model-based RL and requires additional computation \cite{ashok2019paca}.

This work focuses on a widely used surrogate reward \cite{bozkurt2020control} based on the limit-deterministic B\"uchi automaton. 
It assigns a constant reward for ``good'' states with two discount factors. 
Given a policy, the probability of infinitely often visiting the ``good'' states shall equal the satisfaction probability of the LTL objective. 
The surrogate reward yields a value function equal to the satisfaction probability when taking the limit of both discount factors to one. 

Nevertheless, whether dynamic programming based on this surrogate reward can find the satisfaction probability has still not been fully studied. 
We noticed that more recent works \cite{voloshin2023eventual,shao2023sample,cai2021modular,hasanbeig2023certified} allow one discount factor to equal $1$ while using the surrogate reward with two discount factors from \cite{bozkurt2020control}. 
Their Bellman equations yield multiple solutions, as the discount factor of $1$ holds in many states.
The approximate value function updated by these Bellman equations may not converge to the value function. 
\cite{xuan2024uniqueness} proposes a sufficient condition to identify the value function from multiple solutions satisfying the Bellman equation.
However, when discounting is missing in many states, whether using dynamic programming or RL based on the Bellman equation shall give us the value function is still not fully answered. 

This work gives a convergence guarantee for the approximate value function during dynamic programming updates. 
We find an upper bound that decays exponentially for the infinite norm of the approximation error. 
Even though the one-step dynamic programming update does not provide a contraction on the approximation error, 
we show that a multi-step contraction always happens as we keep doing the dynamic programming update for enough steps.
We verify our result in the case study where our upper bound holds and approximation error goes to zero. 
The intuition behind our result is that although only a few states have discounting, the discount works on them shall be propagated to other states via the reachability between states. 

\section{Preliminaries} \label{sec:prelim}

This section introduces how an LTL objective can be translated to a surrogate reward function. 
First, we formulate the LTL planning problem into an MDP with a B\"uchi objective, which is a standard approach used by probabilistic model-checking~\cite{baier2008principles,sickert2016limitdeterministic} and other LTL planning works (e.g., ~\cite{bozkurt2020control}). 
Then, we show that the satisfaction probability of the B\"uchi objective can be expressed in another form, i.e. the value function of the surrogate reward.

\subsection{Modeling planning problems with LTL objectives as MDPs with B\"uchi objectives}

We propose our model called \textit{Markov decision processes with B\"uchi objective}. 
It augments general MDPs \cite{baier2008principles} with a set of accepting states.
\begin{definition} \label{def:mdp}
A Markov decision process with B\"uchi objective is a tuple $\mathcal{M} = (S, A, P, s_\mathrm{init}, B)$ where
\begin{itemize}
 \setlength{\itemsep}{0pt}
 \item $S$ is a finite set of states and $s_\mathrm{init} \in S$ is the initial state,
 \item $A$ is a finite set of actions 
 where $A(s)$ denotes the set of allowed actions in the state $s\in S$,
 \item $P: S \times A \times S \to [0,1]$ is the transition probability function such that for all $s\in S$, we have 
 \[
 \sum_{s'\in S}P(s,a,s') = \begin{cases}
  1, & a \in A(s) \\
  0, & a \notin A(s)
 \end{cases},
 \]
 \item $B\subseteq S$ is the set of accepting states. The set of rejecting states $\neg B := S\backslash B$. 
\end{itemize}
\end{definition}
A path of the MDP $\mathcal{M}$ is an infinite state sequence $\sigma = s_0 s_1 s_2 \cdots$ such that for all $i \ge 0$, there exists $a_i \in A(s)$ and $s_{i}, s_{i+1} \in S$ with $P(s_i,a_i,s_{i+1}) > 0$. 
Given a path $\sigma$, the $i$th state is denoted by $\sigma[i] = s_i$. We denote the prefix by $\sigma[{:}i] = s_0 s_1\cdots s_i$ and suffix by $\sigma[i{+}1{:}] = s_{i+1} s_{i+2}\cdots$. 
We say a path $\sigma$ satisfies the B\"uchi objective $\varphi_B$ if $\mathrm{inf}(\sigma)\cap B \ne\emptyset$. Here, $\mathrm{inf}(\sigma)$ denotes the set of states visited infinitely many times on $\sigma$.



Our model is valid as the LTL objective
can be translated into a B\"uchi objective. 
The translation is done by constructing a product MDP from the original MDP and a limit-deterministic B\"uchi automaton \cite{sickert2016limitdeterministic} generated from the LTL objective. 
Instead of looking for the optimal memory-dependent policy for the LTL objective, one can find the optimal memoryless policy on the product MDP~\cite{sickert2016limitdeterministic}. 
Since the maximum satisfaction probability of B\"uchi objective can be achieved by a memoryless deterministic policy.



\subsection{Policy evaluation for B\"uchi objective via reachability}
We change an LTL planning problem into seeking the policy that maximizes the satisfaction probability of the B\"uchi objective. 
Evaluation of the policy requires the calculation of the satisfaction probability, which can be done by calculating the reachability probability. 

\begin{definition} \label{def:policy} 
A memoryless policy $\pi$ is a function $\pi:S \to {A}$ such that $\pi(\sigma[n])\in {A}(\sigma[n])$.
Given an MDP $\mathcal{M} = ( S, A, P, s_{init},B)$ and a memoryless policy $\pi$, a Markov chain (MC) induced by policy $\pi$ is a tuple $\mathcal{M_\pi}=(S, P_\pi,s_{init},B)$ where $P_\pi(s,s')=P(s,\pi(s),s')$ for all $s,s'\in S$.
\end{definition}
Under the policy $\pi$, for a path $\sigma$ starting at state $s$, its satisfaction probability of the B\"uchi objective $\varphi_B$ is defined as
\begin{align}\label{eqn:SP}
 \mathbb{P}_\pi(s,B) := Pr_{\sigma \sim \mathcal{M}_\pi} \big(\mathrm{inf}(\sigma)\cap B \ne\emptyset \mid \exists t: \sigma[t]=s \big). 
\end{align}

The calculation of the satisfaction
probability can be done by calculating the reachability probability. 
The probability of a path under the policy $\pi$ satisfying the B\"uchi objective equals the probability of a path entering the accepting bottom strongly connected component (BSCC) of the induced MC $\mathcal{M}_\pi$. 
\begin{definition} \label{def:BSCC}
A bottom strongly connected component (BSCC) of an MC is a strongly connected component without outgoing transitions. 
A strongly connected component of an MC is a communicating class, which is a maximal set of states that communicate with each other. 
A BSCC is rejecting\footnote{Here we call a state $s\in B$ as an accepting state, a state $s\notin B$ as a rejecting state. Notice that an accepting state must not exist in a rejecting BSCC, and a rejecting state may exist in an accepting BSCC.} if all states $s \notin B$. Otherwise, we call it an accepting BSCC. 
\end{definition}
Any path on the MC will eventually enter a BSCC and stay there. Any path entering an accepting BSCC will visit accepting states infinitely many times, therefore satisfying the B\"uchi objective. 

The calculation of the reachability probability can be done via dynamic programming after one detects all the BSCCs. 
This approach is adopted by probabilistic model-checking and requires complete model knowledge. 

\subsection{Surrogate reward that approximates the satisfaction probability}
A surrogate reward allows one to calculate the satisfaction probability even without knowledge of BSCCs. 
This equivalent representation of the B\"uchi objective allows one to transfer an LTL objective as a discounted reward. 

We study the surrogate reward for B\"uchi objective proposed in \cite{bozkurt2020control}, which is also widely used in \cite{voloshin2023eventual,shao2023sample,cai2021modular,hasanbeig2023certified,cai2023safe}. 
This surrogate reward is designed in a way that its value function approximates the satisfaction probability.
It consists of a reward function $R: S\to\mathbb{R}$ and a state-dependent discount factor function $\Gamma: S\to (0,1]$ with two discount factors $0<\gamma_B<\gamma\le 1$,
\begin{align}\label{eqn:surrogate}
R (s):= 
\begin{cases}
1-\gamma_{B} & s\in B \\ 
0 & s\notin B 
\end{cases}
,\quad \Gamma (s):= 
\begin{cases}
\gamma_{B} & s\in B \\ 
\gamma & s\notin B 
\end{cases}.
\end{align}
A positive reward is collected only when an accepting state is visited along the path. 
For this surrogate reward, the $K$-step return ($K\in \mathbb{N}$ or $K=\infty$) of a path from time $t\in \mathbb{N}$ is 
\begin{align}
&G_{t:K}(\sigma) := \sum_{i=0}^{K} R(\sigma[t+i])\cdot \prod_{j=0}^{i-1} \Gamma(\sigma[t+j]) \notag \\
&G_{t}(\sigma) := \lim_{K\to\infty}G_{t:K}(\sigma).
\end{align}
Here, the definition follows a standard discounted reward setting \cite{sutton2018reinforcement} but allows state-dependent discounting. Suppose the discount factor $\gamma=1$. If a path satisfies the B\"uchi objective, its return shall be a summation of a geometric series as $\sum_{i=0}^\infty(1-\gamma_B)\gamma_B^i = \frac{1-\gamma_B}{1-\gamma_B} = 1$. 
 
The value function $V_\pi(s)$ is the expected return conditional on the path starting at $s$ under the policy $\pi$. And it is related to B\"uchi objective as follows, 
\begin{align}\label{eqn:value}
&V_\pi(s) = \mathbb{E}_\pi [G_t(\sigma)\,|\,\sigma[t] =s]\notag \\
&= \mathbb{E}_\pi[G_t(\sigma)\mid\sigma[t]=s, \mathrm{inf}(\sigma)\cap B \ne\emptyset] \cdot \mathbb{P}_\pi(s,B) \nonumber\\
    &+ \mathbb{E}_\pi[G_t(\sigma)\mid\sigma[t]=s, \mathrm{inf}(\sigma)\cap B =\emptyset] \cdot (1-\mathbb{P}_\pi(s,B)),  
\end{align}
where $1-\mathbb{P}_\pi(s,B)$ stands for the probability of a path not satisfying the B\"uchi objective conditional on the path starting at $s$. 
As $\gamma_B$, $\gamma$ close to $1$, the value function becomes close to $\pi(s,B)$ as
\begin{align}
    &\lim_{\gamma\to 1^-}\mathbb{E}_\pi[G_t(\sigma)\mid\sigma[t]=s, \mathrm{inf}(\sigma)\cap B \ne\emptyset] = 1 \notag \\
    &\lim_{\gamma_B\to 1^-}{\mathbb{E}_\pi[G_t(\sigma)\mid\sigma[t]=s, \mathrm{inf}(\sigma)\cap B =\emptyset]} = 0.
\end{align}

The setting $\gamma_B$ and $\gamma$ is critical for solving an discounted reward problem using the value iteration, or Q-learning. 
These methods are guaranteed to find the optimal policy for the surrogate reward when $\gamma<1$. 
To make sure the optimal policy for the surrogate reward is the optimal policy for the LTL objective, 
one has to take $\gamma$, $\gamma_B$ as close to $1$ as possible to reduce the error between the value function and the satisfaction probability. 
For $\gamma_B$, one can never take $\gamma_B=1$ as $1-\gamma_B=0$ can not serve as a positive reward. 
Setting $\gamma=1$ seems to work in several works. 
However, \cite{xuan2024uniqueness} exposes setting $\gamma=1$ would break the uniqueness of the solution of the Bellman equation, thus hindering a correct evaluation of the satisfaction probability.

\begin{remark}
Other surrogate rewards based on B\"uchi and Rabin automata have been studied but have flaws. 
The surrogate rewards~\cite{hasanbeig2019reinforcement} based on limit-deterministic B\"uchi automata assign a constant reward for ``good'' states with a constant discount factor. 
This approach is technically flawed, as demonstrated by \cite{hahn2020faithful}. 
Surrogate reward \cite{sadigh2014learning} based on Rabin automata assigns constant positive rewards to certain ``good'' states and negative rewards to ``bad'' states. 
However, this surrogate reward function is also not technically correct, as demonstrated in \cite{hahn2019omegaregular}. 
\end{remark}


\section{Problem Formulation and main result}
This section introduces a key challenge when one calculates the satisfaction probability and our answer to it. 
Specifically, one needs to utilize the recursive formulation of the Bellman equation to solve it. 
Thus, two conditions $\mathrm{(i)}$ the Bellman equation to have a unique solution, 
    and $\mathrm{(ii)}$ a discount works on every dynamic programming update are required. 
However, our surrogate reward breaks both conditions as it allows $\gamma = 1$. 
Here, we investigate how dynamic programming calculates the satisfaction probability under the state-dependent discounting.


\subsection{Bellman equation and sufficient condition for the uniqueness of the solution}  
Given a policy, the value function satisfies the Bellman equation.\footnote{We call $V_\pi(s)=R(s) + \Gamma(s) \sum_{s'\in S}P_\pi (s,s') V_\pi(s')$ as the Bellman equation and~$V_\pi^*(s)= \max_{a\in A(s)}\{R(s) + \Gamma(s) \sum_{s'\in S}P(s, a, s') V_\pi^*(s')\}$ as the Bellman optimality equation.} The Bellman equation is derived from the fact that the value of the current state is equal to the expectation of the current reward plus the discounted value of the next state. For the surrogate reward in the equation~\eqref{eqn:surrogate}, the Bellman equation is given as follows: 
\begin{align} \label{eqn:bellman}
V_\pi(s) 
&=
\begin{cases}
 1-\gamma_{B}+\gamma_{B}\sum_{s'\in S}{P_\pi(s,s')V_\pi(s')} & s\in B \\ 
\gamma\sum_{s'\in S}{P_\pi(s,s')V_\pi(s)} & s\notin B 
\end{cases}.
\end{align}


Previous work \cite{voloshin2023eventual,hasanbeig2023certified,shao2023sample} allows $\gamma=1$. 
However, setting $\gamma=1$ yields multiple solutions of the Bellman equations, raising concerns about applying dynamic programming or RL.
A sufficient condition to restrict the uniqueness of the solution is proposed in \cite{xuan2024uniqueness}. 
\begin{lemma}\label{thm:1}
The Bellman equation~\eqref{eqn:bellman} has the value function as the unique solution, 
if and only if the solution for any state in a rejecting BSCC is zero \cite{xuan2024uniqueness}. 
\end{lemma}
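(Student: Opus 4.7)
The plan is to prove both directions separately. For the \emph{only if} direction, I would argue directly from the definition of the value function: a path starting in a rejecting BSCC stays in that BSCC forever and never visits an accepting state, so no positive reward is ever collected and the return is zero almost surely; hence $V_\pi(s)=0$ on every rejecting BSCC, and uniqueness then forces any solution to agree with $V_\pi$ there.

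For the \emph{if} direction, let $V$ be any solution of~\eqref{eqn:bellman} satisfying $V(s)=0$ on every rejecting BSCC, and set $D:=V-V_\pi$. Since $V_\pi$ also solves~\eqref{eqn:bellman} and is itself zero on rejecting BSCCs by the argument above, $D$ satisfies the homogeneous equation $D(s)=\Gamma(s)\sum_{s'}P_\pi(s,s')D(s')$ together with $D=0$ on every rejecting BSCC. Iterating this identity $n$ times gives
\begin{equation*}
D(s) \;=\; \mathbb{E}_\pi\Bigl[\,D(\sigma[n])\prod_{j=0}^{n-1}\Gamma(\sigma[j])\,\Big|\,\sigma[0]=s\Bigr],
\end{equation*}
which is bounded in absolute value by $\|D\|_\infty$ times the expected product of discount factors along the first $n$ steps.

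Next I would analyze $D(s)$ by cases, using the BSCC decomposition of the induced Markov chain $\mathcal{M}_\pi$. If $s$ lies in an accepting BSCC, the path stays in that BSCC and visits $B$ infinitely often almost surely by recurrence, and each such visit contributes a factor $\gamma_B<1$ to the product, so $\prod_{j}\Gamma(\sigma[j])\to 0$ almost surely; dominated convergence (available because $|D|\le\|D\|_\infty<\infty$ on the finite state space) then yields $D(s)=0$. If $s$ is transient, the chain enters some BSCC in finite time almost surely: on rejecting BSCCs $D$ vanishes by hypothesis, and on accepting BSCCs by the previous case, so letting $n\to\infty$ inside the expectation again gives $D(s)=0$. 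Combined with $D\equiv 0$ on rejecting BSCCs by hypothesis, this shows $V=V_\pi$ everywhere.

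The main obstacle is precisely the regime $\gamma=1$ that motivates this lemma: in that case the one-step operator has no uniform contraction and a standard Banach fixed-point argument fails outright. The crucial observation needed to get around this is that contraction is produced only by visits to $B$, and that on every accepting BSCC such visits occur infinitely often almost surely by strong connectivity; thus contraction is recovered in a multi-step, almost-sure sense even though it is absent at each single step, and the boundary condition $D=0$ on rejecting BSCCs rules out the parasitic non-zero solutions to which the missing discounting would otherwise give rise.
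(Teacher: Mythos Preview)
The paper does not supply its own proof of this lemma: it is quoted verbatim from \cite{xuan2024uniqueness} and used as a black box, so there is no in-paper argument to compare against. Your proposal, however, is a correct and self-contained proof. The ``only if'' direction is immediate once one observes that $V_\pi$ vanishes on rejecting BSCCs, and your ``if'' direction---iterating the homogeneous relation to $D(s)=\mathbb{E}_\pi\bigl[D(\sigma[n])\prod_{j<n}\Gamma(\sigma[j])\bigr]$ and then killing the right-hand side by a BSCC case split (accepting BSCCs force $\prod\Gamma\to 0$ a.s.\ via infinitely many visits to $B$; rejecting BSCCs give $D=0$ by hypothesis; transient states are absorbed into one of the two)---is sound, with dominated convergence justified by the finite state space.

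It is worth noting that your mechanism for recovering contraction when $\gamma=1$---namely, that discounting is supplied \emph{eventually} through reachability to $B$ rather than at every step---is exactly the idea the paper later makes quantitative in Lemmas~\ref{lem:leak} and~\ref{lem:main} to obtain the multi-step contraction $\|H^N\|_\infty<1$. Your probabilistic almost-sure argument and the paper's matrix-norm argument are two faces of the same phenomenon; yours gives the qualitative uniqueness statement cleanly, while the paper's gives an explicit rate.
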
 

\subsection{Open question on the convergence of dynamic programming}

Based on Lemma~\ref{thm:1}, dynamic programming is a way to compute the value function using the Bellman equation. Given an initialization of the approximate value function $U_{(0)}$ and the Bellman equation~\eqref{eqn:bellman}, we shall iteratively do the dynamic programming update as 
\begin{align}\label{eqn:dynamic programming backup}
U_{(k+1)} 
 =(1-\gamma_{B})
\begin{bmatrix}
\mathbb{I}_{m} \\ 
\mathbb{O}_{n} \\ 
\end{bmatrix} + \begin{bmatrix}
\gamma_{B}I_{m\times m} &   \\ 
  & \gamma I_{n\times n} \\ 
\end{bmatrix}P_{\pi} 
U_{(k)}, 
\end{align}  
where $m=\vert B\vert$ is the number of accepting states, 
$n=\vert \neg B \vert$ is the number of rejecting states. 
$\mathbb{I}$ and $\mathbb{O}$ are column vectors with all $1$ and $0$ elements, respectively. 
We expect the approximate value function $U_{(k)}$ to converge to the value function $V$ during updates. 

Suppose we initialize $U_{(0)}$ as a zero vector. The sufficient condition for the uniqueness of the solution holds for all $U_{(k)}$. The value function is the only fix-point for the above update. However, a convergence guarantee is still needed to show we can compute the satisfaction probability using the above dynamic programming update. 

When $\gamma < 1$, the convergence is guaranteed by the one-step contraction shown as 
\begin{align}\label{eqn:one-step}
    \Vert U_{(k+1)}-V \Vert_\infty \le  \gamma \Vert U_{(k)}-V \Vert_\infty. 
\end{align}
As $\gamma = 1$, this contraction no longer holds, and the convergence to the value function is still an open question.
This motivates us to study the following problem.
\begin{quote}
\textbf{Problem Formulation:} 
For an MDP with B\"uchi objective $\mathcal{M}$ by Definition~\ref{def:mdp} and the surrogate reward~\eqref{eqn:surrogate}, given a policy $\pi$. Starting with $U_{(0)}=\mathbb{O}$, show approximate value function $U_{(k)}$ updated by dynamic programming~\eqref{eqn:dynamic programming backup} will converge to the value function $V$ as $k\to \infty$. And give an upper bound for the error $\Vert U_{(k)}-V_\pi \Vert_\infty$. 
\end{quote}
In the following, we assume a fixed policy $\pi$, leading us to omit the $\pi$ subscript from most notation when its implication is clear from the context. 

\subsection{Overview on main result}
When $\gamma = 1$, we find a \textit{multi-step contraction} shown as
\begin{align}
    \Vert U_{(k+N)}-V \Vert_\infty \le  c \Vert U_{(k)}-V \Vert_\infty  
\end{align}
exists for the dynamic programming update, where $N\in \mathbb{N}^+$ and $c\in(0,1)$ is a constant. Even though rejecting states lacks discounting, their reachability to accepting states still provides contraction. With this finding, we claim the convergence guarantee as follows.

\begin{theorem}\label{main result}
    Given an MDP $\mathcal{M}$ and the surrogate reward~\eqref{eqn:surrogate}, given a policy $\pi$. Starting with $U_{(0)}=\mathbb{O}$, approximate value function $U_{(k)}$ updated by dynamic programming~\eqref{eqn:dynamic programming backup} will converge to the value function $V$ as
\begin{itemize}
\item if $\gamma<1$
    \begin{align}
{\left \Vert  U_{(k)}-V\right \Vert_\infty}&\le \gamma^k
{\left \Vert  V\right \Vert_\infty}, 
\end{align}
    \item     if $\gamma = 1$
    \begin{align}
{\left \Vert  U_{(k)}-V\right \Vert_\infty}&\le (1-(1-\gamma_B)\varepsilon^{n'} )^{\floor*{\frac{k}{n'+1}}}
{\left \Vert  V\right \Vert_\infty}, 
\end{align}
where $\varepsilon$ is the lower bound for all possible transitions, that is, for all $(s,s') \in \{(s,s') | P_\pi(s,s')>0\}, P_\pi(s,s')\ge \varepsilon >0$. 
\end{itemize}
\end{theorem}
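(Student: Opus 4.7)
The plan is to recast the dynamic programming update as a linear iteration on the error $e_{(k)} := U_{(k)} - V$. Since $V$ satisfies the Bellman equation~\eqref{eqn:bellman}, subtracting it from~\eqref{eqn:dynamic programming backup} cancels the constant reward term, leaving $e_{(k+1)} = M\, e_{(k)}$, where $M := \mathrm{diag}(\Gamma)\,P_\pi$ scales each row of $P_\pi$ by $\gamma_B$ on accepting coordinates and by $\gamma$ on rejecting ones. Iterating gives $e_{(k)} = M^k e_{(0)}$, and $\|e_{(0)}\|_\infty = \|V\|_\infty$ since $U_{(0)} = \mathbb{O}$. For $\gamma<1$, every row of $M$ sums to at most $\gamma$, so $\|M\|_\infty\le\gamma$, and the one-step contraction~\eqref{eqn:one-step} iterates directly to the first bound.

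The case $\gamma = 1$ is the substantive one. Rejecting rows of $M$ now sum to $1$, so $\|M\|_\infty = 1$ and a multi-step bound on $M^{n'+1}$ is needed. First I would dispose of rejecting BSCCs: by Lemma~\ref{thm:1}, $V(s)=0$ on those states, and an easy induction on $k$ (using $U_{(0)}=\mathbb{O}$ together with the absorbing nature of a BSCC) shows $U_{(k)}(s)=0$ throughout. Hence $e_{(k)}$ vanishes on those coordinates, and the contraction analysis can be restricted to the submatrix of $M$ indexed by states lying outside any rejecting BSCC; I interpret $n'$ as counting the rejecting states among these.

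On these coordinates, I would compute row sums of $M^{n'+1}$ probabilistically: $\sum_{s'} M^{n'+1}(s,s') = \mathbb{E}_s[\gamma_B^{N_B}]$, where $N_B$ is the number of visits to $B$ among the first $n'+1$ states of a sample path starting at $s$ in $\mathcal{M}_\pi$. The key combinatorial fact is that any state $s$ outside every rejecting BSCC admits a simple path to $B$ of length at most $n'$, since such a path traverses at most $n'$ distinct non-BSCC rejecting states before entering $B$. Along this shortest path each transition has probability at least $\varepsilon$, so $B$ is reached within $n'+1$ steps with probability at least $\varepsilon^{n'}$, giving $\mathbb{E}_s[\gamma_B^{N_B}] \le (1-\varepsilon^{n'}) + \varepsilon^{n'}\gamma_B = 1 - (1-\gamma_B)\varepsilon^{n'}$. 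This uniform bound on the restricted submatrix is precisely the multi-step contraction $\|M^{n'+1}\|_\infty \le 1 - (1-\gamma_B)\varepsilon^{n'}$. Iterating over $\floor*{\frac{k}{n'+1}}$ blocks and using $\|M\|_\infty \le 1$ to absorb the at most $n'$ residual updates then yields the claimed geometric bound.

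The main obstacle will be the reachability step: I need to verify carefully that $n'$ really suffices as the horizon, i.e.\ that a simple path to $B$ of length at most $n'$ exists from every non-BSCC state, and that the $\varepsilon^{n'}$ lower bound holds uniformly. This rests on the classical fact that a state outside every rejecting BSCC must reach an accepting state in finitely many steps through distinct non-BSCC rejecting intermediates. A secondary subtlety is that transitions from non-BSCC states may leak into a rejecting BSCC, but such paths contribute only $\gamma_B^{0} = 1$ to $\mathbb{E}_s[\gamma_B^{N_B}]$, which is already accommodated by the $(1-\varepsilon^{n'})\cdot 1$ term above, so the block decomposition remains self-consistent.
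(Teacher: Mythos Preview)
Your overall architecture matches the paper's exactly: recast the update as a linear error iteration, peel off rejecting BSCCs using the zero initialization and Lemma~\ref{thm:1}, bound the row sums of the $(n'+1)$-fold product on the remaining coordinates, and iterate in blocks while absorbing the residual steps via $\|M\|_\infty\le 1$. The probabilistic identity $\sum_{s'}M^{n'+1}(s,s')=\mathbb{E}_s[\gamma_B^{N_B}]$ is a clean way to phrase what the paper calls the ``discounted'' row sum.

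There is, however, a genuine gap in your reachability step. The claim ``any state $s$ outside every rejecting BSCC admits a simple path to $B$'' is false in general: a transient rejecting state may have \emph{all} of its paths drain into rejecting BSCCs without ever touching $B$. For such $s$ one has $N_B=0$ almost surely, hence $\mathbb{E}_s[\gamma_B^{N_B}]=1$, and your bound $(1-\varepsilon^{n'})+\varepsilon^{n'}\gamma_B$ fails. Your final paragraph does not repair this, because the ``accommodation'' in the $(1-\varepsilon^{n'})\cdot 1$ term presupposes $\mathbb{P}(N_B\ge 1)\ge\varepsilon^{n'}$, which is exactly what breaks.

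The paper avoids this by working with the \emph{restricted} submatrix $H$ on $X=B\cup\neg B_{T,A}$ rather than bounding the full row sum. On $X$, paths that leak into a rejecting BSCC simply vanish from the row sum of $H^{n'+1}$, so contraction is supplied by two mechanisms at once: discounting when the path hits $B$, and loss of mass when it exits to $\neg B_R$. The correct combinatorial fact (the paper's Lemma~\ref{lem:leak}) is that every state in $\neg B_{T,A}$ has a simple path of length at most $n'$ \emph{out of} $\neg B_{T,A}$---to $B$ or to $\neg B_R$---so the probability of remaining in $\neg B_{T,A}$ for $n'$ steps is at most $1-\varepsilon^{n'}$. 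Splitting the restricted row sum by whether the path stays in $\neg B_{T,A}$ for the first $n'$ steps then gives $\eta+\gamma_B(1-\eta)\le 1-(1-\gamma_B)\varepsilon^{n'}$ with $\eta\le 1-\varepsilon^{n'}$. Once you make this correction, the remainder of your plan goes through unchanged.
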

As the theorem claims, one can use the surrogate reward and dynamic programming to compute the satisfaction probability of the B\"uchi objective on the MDP. Thus, the following corollary holds. 
\begin{corollary}
    For a product MDP constructed by the MDP and the limit-deterministic B\"uchi automaton \cite{sickert2016limitdeterministic} generated from the LTL objective. 
    Given a policy $\pi$ on the product MDPs. The surrogate reward and dynamic programming can be used to compute the satisfaction probability of the LTL objective on the MDPs. 
\end{corollary}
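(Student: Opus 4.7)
The plan is to derive the corollary as a direct consequence of Theorem~\ref{main result} together with the standard reduction from LTL synthesis to B\"uchi objectives on product MDPs. The product MDP built from the original MDP and the limit-deterministic B\"uchi automaton of Sickert et al.\ is itself an instance of the MDP-with-B\"uchi-objective model of Definition~\ref{def:mdp}, with the accepting states of the automaton inducing the B\"uchi set $B$. So the task reduces to verifying that the hypotheses of Theorem~\ref{main result} transfer to the product MDP and that the resulting surrogate value function carries the intended information about the original LTL satisfaction probability.

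First I would invoke the classical correspondence due to Sickert et al.: for any memoryless policy $\pi$ on the product MDP, the B\"uchi satisfaction probability on the product MDP equals the LTL satisfaction probability on the original MDP under the associated (generally memory-dependent) policy, and an optimal memoryless policy on the product MDP realizes the optimal LTL satisfaction probability. This justifies computing things entirely on the product MDP with memoryless policies. Second, I would apply Theorem~\ref{main result} to the product MDP under $\pi$, obtaining exponential convergence of the dynamic programming iterates $U_{(k)}$ to the surrogate value function $V_\pi$. Combining this with equation~\eqref{eqn:value} and the limits showing $V_\pi(s) \to \mathbb{P}_\pi(s,B)$ as $\gamma,\gamma_B \to 1^{-}$, one concludes that, by choosing $\gamma,\gamma_B$ sufficiently close to one and running dynamic programming for enough steps, the iterates approximate the LTL satisfaction probability to arbitrary accuracy.

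The only mild obstacle is checking that the strictly positive lower bound $\varepsilon$ on transition probabilities required in Theorem~\ref{main result} survives the product construction. Because the product MDP is finite and its nonzero transitions are products of nonzero transitions of the original MDP with deterministic automaton transitions, the minimum nonzero transition probability of the induced Markov chain remains strictly positive; thus the hypothesis of Theorem~\ref{main result} holds with essentially the same $\varepsilon$ inherited from the original MDP, and the convergence bounds stated there apply verbatim on the product MDP.
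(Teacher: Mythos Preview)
Your proposal is correct and follows essentially the same approach as the paper. The paper does not give a formal proof of the corollary at all: it simply states, immediately before the corollary, that Theorem~\ref{main result} allows one to compute the B\"uchi satisfaction probability via dynamic programming, and since the product MDP with the limit-deterministic B\"uchi automaton is an instance of Definition~\ref{def:mdp} whose B\"uchi satisfaction equals the LTL satisfaction on the original MDP (as discussed in Section~\ref{sec:prelim}), the corollary follows; your write-up is a more explicit and careful elaboration of exactly this chain of reductions, including the observation that the minimal positive transition probability $\varepsilon$ persists under the product construction and the reminder that one must send $\gamma,\gamma_B\to 1^-$ to recover the exact satisfaction probability from $V_\pi$.
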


Here, we give the convergence guarantee based on the discount factor $\gamma, \gamma_B$ and the reachability to the discounted state expressed as $\varepsilon^{n'}$. The first bound is commonly seen for dynamic programming as each update provides one-step contraction when $\gamma<1$. The second bound relies on the multi-step contraction shown in the following section.

\section{Multi-step contraction and proof of the main result}
In this section, we will formally prove the main result by exploiting the reachability from undiscounted rejecting states to discounted accepting states. 
First, we simplify the dynamic programming update~\eqref{eqn:dynamic programming backup} using Lemma~\ref{thm:1}. 
Then, we show the infinite norm of the error vector will surely be contracted when the update is applied enough times.  
\subsection{Dynamic programming for states outside rejecting BSCCs}
The sufficient condition in Lemma~\ref{thm:1} always holds as we initialize $U_{(0)}=\mathbb{O}$. Since the approximate value function on the rejecting BSCCs stays at zero.  
We can simplify the dynamic programming update~\eqref{eqn:dynamic programming backup} by dropping all states inside rejecting BSCCs, 
\begin{align}\label{eqn:dynamic programming short}
&\begin{bmatrix}
U^{B} \\ U^{\neg B_{T,A}}\\ 
\end{bmatrix}_{(k+1)} =(1-\gamma_{B})
\begin{bmatrix}
\mathbb{I}_{m} \\ 
\mathbb{O}_{n'} \\ 
\end{bmatrix} +H
\begin{bmatrix}
U^{B} \\ U^{\neg B_{T,A}}\\ 
\end{bmatrix}_{(k)}, 
\end{align} 
where 
\begin{align}
 H &=  {\begin{bmatrix}
 \gamma_{B}I_{m\times m} & \\ 
 & \gamma I_{n'\times n'} \\ 
 \end{bmatrix}} 
 \underbrace{\begin{bmatrix}
 P_{B \rightarrow B} & P_{B\rightarrow \neg B_{T,A}}\\ 
 P_{\neg B_{T,A}\rightarrow B} & P_{\neg B_{T,A}\rightarrow \neg B_{T,A}}\\ 
\end{bmatrix}}_{T} \notag \\
&= \begin{bmatrix}
 \gamma_B P_{B \rightarrow B} & \gamma_B P_{B\rightarrow \neg B_{T,A}}\\ 
 \gamma P_{\neg B_{T,A}\rightarrow B} & \gamma P_{\neg B_{T,A}\rightarrow \neg B_{T,A}}\\ 
\end{bmatrix}.
\end{align}
Here, the state space is partitioned into three sets of states $B$, $\neg B_R$, $\neg B_{T,A}$ representing the set of accepting states, the set of states in the rejecting BSCCs and the set of remaining rejecting states. 
$U_\pi^{B}\in\mathbb{R}^{m}$, $U_\pi^{\neg B_{T,A}}\in\mathbb{R}^{n'}$ are the vectors listing the approximate value function for all $s\in {B}$, $s\in {\neg B_{T,A}}$, respectively. 
Matrix $T$ represents the transition between states in $X:=\{B, \neg B_{T,A}\}$. 
Sub-matrix $P_{B\rightarrow B}$, $P_{B\rightarrow \neg B_{T,A}}$ etc. contains the transition probability from a set of states to a set of states. 

At each dynamic programming update, the approximation error is updated by a linear mapping as 
\begin{align}\label{eqn:error}
D_{(k+1)} &= HD_{(k)}. 
\end{align}
where
\begin{align}
    {D}_{(k)}= \begin{bmatrix}
U^{B} \\ U^{\neg B_{T,A}}\\ 
\end{bmatrix}_{(k)}-\begin{bmatrix}
V^{B} \\ V^{\neg B_{T,A}}\\ 
\end{bmatrix} . 
\end{align}
$V_\pi^{B}\in\mathbb{R}^{m}$, $V_\pi^{\neg B_{T,A}}\in\mathbb{R}^{n'}$ are the vectors listing the value function for states in $B$ and $ {\neg B_{T,A}}$, respectively. 
When setting $\gamma = 1$, the requirement for the one-step contraction~\eqref{eqn:one-step}, which is $\Vert H \Vert_\infty<1$ does not hold. 
\subsection{Multi-step contraction}
Even with $\gamma = 1$, we can still show there exists a $N\in \mathbb{N}^+$ such that $\Vert H^N \Vert_\infty<1$. 
The multi-step update is given as
\begin{align}\label{multi-update}
    D_{(k+N)} = H^N D_{(k)} .
\end{align}
By showing each row sum of $H^N$ is strictly less than that of $T^N$, we guarantee $\Vert H^N \Vert_\infty<1$. 
In $T^{N}$, each element $\{T^{N}\}_{ij}$ is the probability of a path starting at $i$ visiting $j$ in the next $N$ steps, thus $\Vert T^N \Vert_\infty \le 1$.  
In $H^{N}$, each element $\{H^{N}\}_{ij}$ is expressed as a ``discounted'' version of $\{T^{N}\}_{ij}$,  
\begin{align}\label{eqn:dis prob}
 &\{H^{N}\}_{ij} = \sum_{s_1\in X}{P_{\pi,\gamma_B}(i,s_1)}\sum_{s_2\in X}{P_{\pi,\gamma_B}(s_1,s_2)} \notag \\ 
 &\cdots\sum_{s_{N-1}\in X}{P_{\pi,\gamma_B}(s_{N-2},s_{N-1})}{P_{\pi,\gamma_B}(s_{N-1},j)},
\end{align}
where 
\begin{align}\label{eqn: discounted}
P_{\pi,\gamma_B}(s,s')=
\begin{cases}
\gamma_B P_{\pi}(s,s')& s\in B \\ 
P_{\pi}(s,s') & s\in X\backslash B
\end{cases}.
\end{align}
Whenever a one-step transition starting from the accepting state happens, a discount $\gamma_B$ shall be applied in \eqref{eqn: discounted}, making $\{H^{N}\}_{ij}<\{T^{N}\}_{ij}$. 

The row sum of $H^N$ shall be strictly less than the corresponding row sum of $T^N$ if the probability of visiting an accepting state within the future $N-1$ steps is greater than zero. 
Thus, the contraction $\Vert H^N \Vert_\infty<1$ is brought in by the reachability from rejecting states to accepting, which is formalized as the following Lemma. 

\begin{lemma}\label{lem:leak}
 Starting with a state in $\neg B_{T,A}$, the probability of a path not visiting the set $B$ in $n':=\vert \neg B_{T,A}\vert$ steps is upper bounded by $1-\varepsilon^{n'}$. 
\end{lemma}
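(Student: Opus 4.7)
The plan is to reduce the probabilistic bound to a purely combinatorial statement by constructing, for each $s \in \neg B_{T,A}$, a single realization reaching $B$ within $n'$ steps and lower-bounding its probability. Since the event of visiting $B$ within $n'$ steps contains this realization as one possible sample path, its probability is at least the path's probability, and the complement bound on not visiting $B$ then follows immediately.

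First I would establish the structural fact that from every $s \in \neg B_{T,A}$ there exists a positive-probability path $s = s_0, s_1, \ldots, s_k$ with $k \le n'$ and $s_k \in B$. Existence of \emph{some} reaching path uses the BSCC structure of $\mathcal{M}_\pi$: every BSCC either meets $B$ or is contained in $\neg B_R$, so $\neg B_{T,A}$ contains no BSCC and hence no closed set under $P_\pi$; consequently any trajectory starting in $\neg B_{T,A}$ must eventually exit, and under the intended reading of $\neg B_{T,A}$ (rejecting states outside rejecting BSCCs from which $B$ is reachable) at least one such exit lands in $B$. To bound the length by $n'$, a pigeonhole / shortest-path argument suffices: any reaching path of length exceeding $n'$ must revisit a state of $\neg B_{T,A}$ (which has $n'$ elements), and deleting the loop between the repeated occurrences yields a strictly shorter reaching path. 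Iterating produces a simple witness of length at most $n'$.

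Finally, since every positive entry of $P_\pi$ is at least $\varepsilon \in (0,1]$, this witness path has probability at least $\prod_{i=0}^{k-1} P_\pi(s_i, s_{i+1}) \ge \varepsilon^{k} \ge \varepsilon^{n'}$, so
\[
\Pr_{\sigma \sim \mathcal{M}_\pi}\bigl(\sigma[i] \notin B \text{ for all } 1 \le i \le n' \,\bigm|\, \sigma[0]=s\bigr) \le 1 - \varepsilon^{n'}.
\]

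The main obstacle is the reachability step in the structural fact. The literal three-way partition stated in the paper does not by itself exclude rejecting transients from which only rejecting BSCCs are reachable; I would therefore need either to invoke the convention that such states are absorbed into $\neg B_R$ (since both classes satisfy $V_\pi(s)=0$ and remain at zero under the update initialized at $\mathbb{O}$, so dropping them is harmless) or to restrict the statement of Lemma~\ref{lem:leak} accordingly. Once this subtlety is settled, the pigeonhole shortening and the product-of-transitions lower bound are routine.
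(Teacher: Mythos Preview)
Your approach matches the paper's almost exactly: a structural argument that $\neg B_{T,A}$ contains no closed set (the paper phrases this as the recurrent/transient dichotomy for states of $\neg B_{T,A}$), a pigeonhole shortening to cap the exit-path length at $n'$, and the product bound $\varepsilon^{n'}$ on the witness probability.

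The obstacle you flag is real for the lemma \emph{as literally stated}, but the paper does not resolve it by redefining $\neg B_{T,A}$ as you suggest. Instead, its proof silently relaxes the target event: what it actually bounds is $\mathbb{P}(s_1,\ldots,s_{n'}\in\neg B_{T,A}\mid s_0=i)$, the probability of \emph{remaining in} $\neg B_{T,A}$ for $n'$ steps, rather than the probability of not visiting $B$. These differ exactly when the path escapes to $\neg B_R$, and the paper's case split explicitly allows that exit (``Either way, the transition leaving the set $\neg B_{T,A}$ will happen''). This weaker event is also precisely what is consumed downstream in Lemma~\ref{lem:main}, where $\eta=\mathbb{P}(s_1,\ldots,s_{n'}\in\neg B_{T,A},\,s_{n'+1}\in X\mid s_0=i)$. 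So your proposed fix of absorbing dead transients into $\neg B_R$ would work, but the simpler route---and the one the paper takes---is to prove and use the ``stay in $\neg B_{T,A}$'' version, which requires no assumption that $B$ itself is reachable.
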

\begin{proof}
A transition leaving $\neg B_{T,A}$ must exist since all states in $\neg B_{T,A}$ are either 
\begin{enumerate}
 \item a recurrent state inside an accepting BSCC. Any path starting from such a recurrent state will eventually meet an accepting state. Thus, at least one path leaves the set $\neg B_{T,A}$.
 \item A transient state that will enter a BSCC. Any path starting from such a transient state will eventually enter an accepting BSCC or a rejecting BSCC. 
 Either way, the transient leaving the set $\neg B_{T,A}$ will happen. 
\end{enumerate}
As for all state $i\in {\neg B_{T,A}}$, there exists at least one path that will leave ${\neg B_{T,A}}$ in $n'$ steps, 
\begin{align}
 &1- \mathbb{P}(s_1, \cdots s_{n'} \in {\neg B_{T,A}}|s_0 = i) \ge \varepsilon^{n'} \notag \\
 \Rightarrow &\mathbb{P}(s_1, \cdots s_{n'} \in {\neg B_{T,A}}, s_{n'+1}\in S|s_0 = i) \le 1- \varepsilon^{n'}.
\end{align}
The existence of such a path can be shown by constructing a path starting at $i$ and never visiting any states in ${\neg B_{T,A}}$ more than once. One can use the diameter of a graph to prove this existence formally and we omit it for space considerations.
\end{proof}

By this reachability property, we show the multi-step contraction, which is technically described by the following lemma. 
\begin{lemma}\label{lem:main}
When $\gamma = 1$, given the approximation error $D_{(k)}$ updated by equation~\eqref{eqn:error} , there always exists a constant $c\in(0,1)$ and a positive integer $N\le \vert \neg B_{T,A}\vert+1$ such that 
 \begin{align}\label{eqn:multi-stage}
    \Vert D_{(k+N)} \Vert_\infty \le  c \Vert D_{(k)} \Vert_\infty. 
\end{align}
\end{lemma}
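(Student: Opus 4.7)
The plan is to bound the operator norm $\|H^N\|_\infty$ directly, since the multi-step recursion $D_{(k+N)} = H^N D_{(k)}$ from~\eqref{multi-update} gives $\|D_{(k+N)}\|_\infty \le \|H^N\|_\infty \|D_{(k)}\|_\infty$, so the whole lemma reduces to exhibiting an $N \le n'+1$ with $\|H^N\|_\infty < 1$. Because $H \ge 0$ entrywise, the infinity norm equals the maximum row sum, and I would aim to prove that with the choice $N = n'+1$ every row sum of $H^N$ is at most $c := 1 - (1-\gamma_B)\varepsilon^{n'}$, which is strictly below $1$.

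For the row-sum interpretation I would rewrite $\sum_{j \in X} \{H^N\}_{ij}$, starting from~\eqref{eqn:dis prob}, as the expectation
\[
\sum_{j \in X} \{H^N\}_{ij} = \mathbb{E}\!\left[ \gamma_B^{|\{t \in \{0,\ldots,N-1\}\,:\, s_t \in B\}|}\, \mathbf{1}\{s_1,\ldots,s_N \in X\} \,\Big|\, s_0 = i \right],
\]
where $X = B \cup \neg B_{T,A}$. Each surviving path is weighted by one factor of $\gamma_B$ per visit to $B$ at a discounted step, while paths that leak into $\neg B_R$ are erased by the indicator. This exposes two different ways the row sum can shrink: a multiplicative discount from visiting $B$, and a loss of mass when the path leaks into the dropped set $\neg B_R$.

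The case analysis on the starting state $i$ is then clean. If $i \in B$, the very first transition is out of an accepting state, so every contributing path carries at least one $\gamma_B$, and the row sum is at most $\gamma_B \le c$. For $i \in \neg B_{T,A}$, set $G = \{s_1,\ldots,s_{n'} \in \neg B_{T,A}\}$; Lemma~\ref{lem:leak} gives $\mathbb{P}(G) \le 1 - \varepsilon^{n'}$. On $G$ no intermediate state lies in $B$, so the discount exponent vanishes and this contribution is at most $\mathbb{P}(G)$. On $G^c$, any surviving path must have some $s_t \in B$ with $1 \le t \le n'$, since the only alternative $s_t \in \neg B_R$ is absorbing and kills the indicator; hence the $G^c$ contribution is at most $\gamma_B\,\mathbb{P}(G^c)$. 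Adding,
\[
\sum_{j \in X} \{H^{n'+1}\}_{ij} \le \mathbb{P}(G) + \gamma_B\,\mathbb{P}(G^c) = \gamma_B + (1-\gamma_B)\mathbb{P}(G) \le 1 - (1-\gamma_B)\varepsilon^{n'} = c,
\]
which closes both cases and yields the claimed multi-step contraction.

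The main obstacle I anticipate is accounting for the two shrinkage mechanisms—the multiplicative $\gamma_B$ from visits to $B$ and the additive leakage to $\neg B_R$—without double-counting or missing either one. The expectation representation above resolves this cleanly: leaking paths drop out of the indicator entirely, while every surviving path on $G^c$ is forced through $B$ and so earns at least one $\gamma_B$. A secondary subtlety is that Lemma~\ref{lem:leak} bounds the probability of \emph{staying in} $\neg B_{T,A}$ rather than of avoiding $B$, but this is precisely what the split above needs, since exiting $\neg B_{T,A}$ into either $B$ or $\neg B_R$ is already handled by one of the two mechanisms.
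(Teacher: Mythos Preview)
Your proposal is correct and follows essentially the same route as the paper: choose $N=n'+1$, bound $\|H^N\|_\infty$ row by row, handle rows starting in $B$ via the immediate $\gamma_B$ discount, and for rows starting in $\neg B_{T,A}$ split on the event of remaining in $\neg B_{T,A}$ for $n'$ steps and invoke Lemma~\ref{lem:leak}. Your expectation representation and the explicit treatment of leakage into $\neg B_R$ via the indicator make the bookkeeping somewhat cleaner than the paper's version, but the argument is the same.
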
 

\begin{proof}
For the first $m$ rows of $H^{N}$, every path starts at an accepting state and receives discounting in the beginning. For all $i\le m$, each element $\{H^{N}\}_{ij}\le \gamma_B\{T^{N}\}_{ij}$, we have $\sum_{j\in X} \{H^{N}\}_{ij} \le \gamma_B\sum_{j\in X} \{T^{N}\}_{ij}\le \gamma_B$. 

The remaining $n'$ rows need additional treatment, as discounting may not happen. 
However, we can rule out the case when no discount happens by setting $N = n'+1$. 
 
We split the sum of each row of $T^{n'+1}$ into two parts, 
\begin{align}\label{eqn:long}
 &\sum_{j\in X}\{T^{n'+1}\}_{ij} = \mathbb{P}(s_1,\cdots, s_{n'+1} \in X|s_0 = i) \notag \\
 &= \underbrace{\eta}_{\textrm{no accepting states are visited in } n' \textrm{ steps}} \notag \\ 
 &+\underbrace{\mathbb{P}(s_1,\cdots, s_{n'+1} \in X|s_0 = i)- \eta}_{\textrm{at least one accepting state is visited in } n' \textrm{ steps}} \notag \\
 &\le 1, 
\end{align}
where $\eta$ represents all the paths that will not visit accepting states in $n'$ steps and thus will not get discounted. 
Suppose $\eta =1$, the sum of the entire row of $H^{n'+1}$ shall be $1$. 
However Lemma~\ref{lem:leak} prohibits $\eta = 1$ from happening, 
\begin{align}
 \eta &= \mathbb{P}(s_1, \cdots s_{n'} \in {\neg B_{T,A}},s_{n'+1}\in X|s_0 = i)\notag \\ 
 &\le \mathbb{P}(s_1,\cdots, s_{n'}\in \neg B_{T,A}, s_{n'+1}\in S |s_0 = i) \notag \\
 &\le 1- \varepsilon^{n'}. 
\end{align}
The second part in~\eqref{eqn:long} represents all paths that will visit the accepting state in $n'$ steps at least once.
Thus, in $\sum_{j\in X}\{H^{n'+1}\}$ which represents sum of ``discounted" probability~\eqref{eqn:dis prob}, the first part stays the same. Meanwhile, the second part has to be discounted at least once, 
\begin{align}
 &\sum_{j\in X}\{H^{n'+1}\}_{ij} \notag \\
 &\le \underbrace{\eta}_{\textrm{no discounting}} \notag \\
 & + \gamma_B \underbrace{(\mathbb{P}(s_1,\cdots, s_{n'+1} \in X|s_0 = i)- \eta)}_{\textrm{at least one accepting state is visited in n' steps}} \notag \\
 &\le \eta + \gamma_B(1-\eta ) \notag \\
 &\le 1-(1-\gamma_B)\varepsilon^{n'} .
\end{align}
Thus,  the sum of each row of $H^{n'+1}\le c$ where $c=1-(1-\gamma_B)\varepsilon^{n'}$. And we have 
\begin{align}
    \Vert D_{(k+N)}  \Vert_\infty & =  \Vert H^N D_{(k)} \Vert_\infty  
     \le  c \Vert D_{(k)} \Vert_\infty . 
\end{align}
\end{proof}

The multi-step update always provides a multi-step contraction, so we can upper bound the convergence of approximation error. 
\subsection{Proof of Theorem~\ref{main result}} 
In the case of $\gamma = 1$, we get the $N$-step update of the approximation error as, 
\begin{align}
D_{(k+N)} &= H^{N}D_{(k)}. 
\end{align} 
By Lemma~\ref{lem:main}, after every $N$ update, the infinite norm of error must shrink by a constant $c<1$, 
\begin{align}
{\left \Vert D_{(nN)}\right \Vert_\infty}\le c^n
{\left \Vert D_{(0)}\right \Vert_\infty}. 
\end{align}
As $n\to \infty$, the error $\Vert {D}_{(nN)} \Vert_\infty\to 0$. 

Meanwhile, since the sum of each row of $H\le 1$, the error won't grow at each one-step update. Thus, we have the convergence as, 
\begin{align}
{\left \Vert D_{(k)}\right \Vert_\infty}&\le c^{\floor*{\frac{k}{N}}}
{\left \Vert D_{(0)}\right \Vert_\infty} .
\end{align}

Theorem~\ref{main result} also considers the case when $\gamma<1$. 
We get the upper bound for the approximation error as 
\begin{align}
{\left \Vert D_{(k)}\right \Vert_\infty}\le \gamma^k
{\left \Vert D_{(0)}\right \Vert_\infty}.
\end{align}
The upper bound on $\Vert D_{(k)} \Vert_\infty$ instantly holds for $\Vert U_{(k)} - V\Vert_\infty$ as starting the approximate value function at $\mathbb{O}$ guarantees $U^{\neg B_R}_{(k)}-V^{\neg B_R} = \mathbb{O}$, where $V^{\neg B_R}$ is the vector listing the value function for all states inside a rejecting BSCCs. 
{\hfill $\blacksquare$}

\section{Case Study}
In this section, we show our upper bound holds in a three-state Markov chain shown in Fig. \ref{fig:1}. 
$s_b$, $s_c$ are states in $\neg B_{T,A}$ and $s_a$ is the only accepting state.
All transitions are deterministic. The discount factor setting here is $\gamma_B=0.99$, $\gamma = 1$. 
The dynamic programming update is given as
\begin{align} 
U_{(k+1)} 
 =
\begin{bmatrix}
.01\\ 
0 \\ 
0\\
\end{bmatrix} + \begin{bmatrix}
.99 &  & \\ 
  & 1 &  \\ 
   & & 1  \\
\end{bmatrix}\begin{bmatrix}
0 &1&  0 \\ 
1 &0 &0  \\ 
0& 1& 0  \\
\end{bmatrix}
U_{(k)}. \notag
\end{align}  
The infinite norm of the $H$ matrix in~\eqref{multi-update} is shown as
\begin{align}
    \Vert H \Vert_\infty = \Vert H^2 \Vert_\infty = 1,\quad \Vert H^3 \Vert_\infty=0.99 \notag
\end{align}
The approximate value function is initialized as $U_{(0)}=\mathbb{O}$, thus the approximate error is $D_{(0)}=\mathbb{I}$ and during the first three updates we have, 
\begin{align} 
D_{(1)}
 =
\begin{bmatrix}
.99\\ 
1 \\ 
1\\
\end{bmatrix},\quad
D_{(2)}
 =
\begin{bmatrix}
.99\\ 
.99 \\ 
1\\
\end{bmatrix}
,\quad
D_{(3)}
 =
\begin{bmatrix}
.99^2\\ 
.99 \\ 
.99\\
\end{bmatrix} \notag .
\end{align}  
Since the discount $\gamma_B$ only works on $s_a$, $D(s_c)$ can decrease only after the $D(s_b)$ is decreased. It takes three updates for the $\Vert D \Vert_\infty$ to decrease from $1$ to $0.99$. 

The upper bound for estimation error provided by Theorem~\ref{main result} is ${\left \Vert U_{(k)}-V\right \Vert_\infty} \le (1-(1-\gamma_B)\varepsilon^{n'} )^{\floor*{\frac{k}{n'+1}}}$ where $\varepsilon = 1$ since all transitions are deterministic and $n'=2$ as $\vert \neg B_{T,A}\vert=2$.
Thus, our theorem yields a three-step contraction shown as 
\begin{align}
    {\left \Vert U_{(k)}-V\right \Vert_\infty} \le 0.99^{\floor*{\frac{k}{3}}} \notag
\end{align}
which captures the fact $\Vert H^3 \Vert_\infty=0.99$. 
The infinite norm of error shall be contracted by $\gamma_B$ after every three dynamic programming updates. 

In Fig.~\ref{fig:error2}, the error $\Vert D_{(k)}\Vert_\infty$ is decreased by $\gamma_B$ in the first three updates, aligning our upper bound. However, the error is decreased by $\gamma_B$ after every two updates when $k> 3$. The reason is that our bound on the infinite norm only considers the worst case. 
Where the two-step update 
\begin{align}
    H^2 = \begin{bmatrix}
.99 &0  & 0\\ 
0 & .99   & 0\\ 
1 & 0& 0\\
\end{bmatrix}\notag 
\end{align} 
decreases $D(s_a)$, $D(s_b)$ by $\gamma_B$ every two updates meanwhile making $D(s_c)_{(k+2)}=D(s_a)_{(k)}$. Since in vector $D_{(3)}$, $D(s_a)_{(3)}$ is the smallest element, thus $D_{(3)}$ serves as a good initialisation to make $\Vert D_{(k)}\Vert_\infty$ decreasing after every two updates.

\begin{figure}
 \centering
 \begin{tikzpicture}[>=stealth, node distance=3cm, on grid, auto]
 \node[state] (A) {$s_c$};
 \node[state, right=of A] (B) {$s_b$};
 \node[state, right=of B] (C) {$s_a$};

 \path[->]
 (A) edge[bend left] node {1} (B)
 (B) edge[bend left] node {1} (C)
 (C) edge[bend left] node {1} (B);
 \end{tikzpicture}
 \caption{Example of a three-state Markov Chain. A discount factor $\gamma_B<1$ and a reward $1-\gamma_B$ hold at $s_a$. Meanwhile, no rewards are gained at $s_b$ and $s_c$. Discount $\gamma$ is applied to $s_b$ and $s_c$ but $\gamma$ can be set to $1$. }\label{fig:1}
\end{figure}
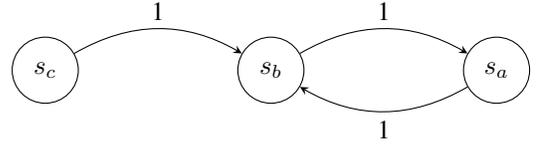

\begin{figure}
 \centering
 \includegraphics[width=0.5\textwidth]{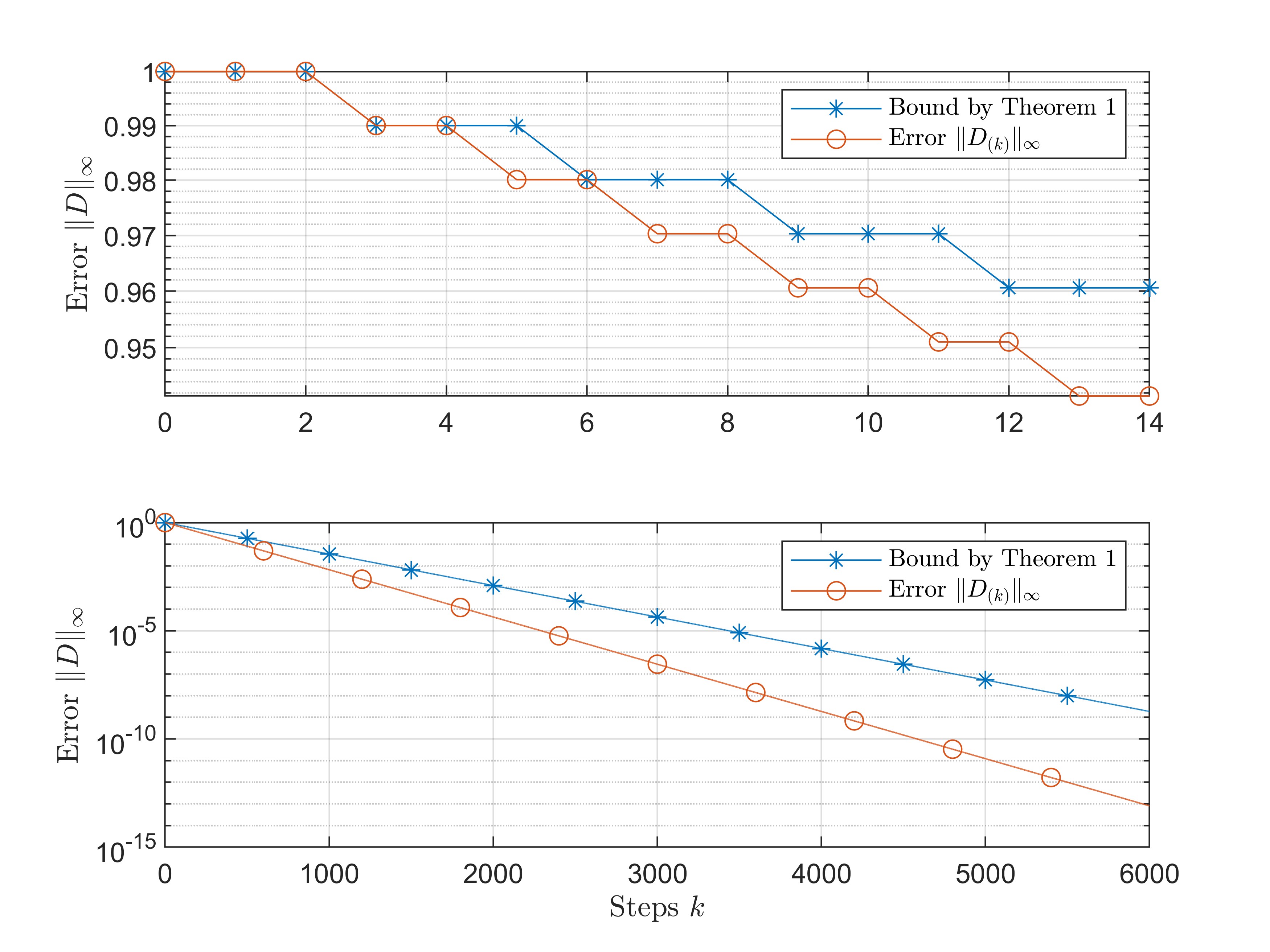}
 \caption{Exponential convergence of estimation error $\Vert D_{(k)} \Vert$ during dynamic programming updates. Our theorem yields an upper bound that shrinks $\gamma_B$ after every three dynamic programming updates. This upper bound captures the convergence in this example.}
 \label{fig:error2}
\end{figure}

\section{Conclusion and future work}
This work answers a challenge when using surrogate reward with two discount factors for complex LTL objectives. 
Specifically, we can always find the value function using dynamic programming even if discounting does not hold in many states. 
We discuss the convergence when using dynamic programming and show that a multi-step contraction exists as we do dynamic programming updates enough times. 
Our findings have implications for the correct policy evaluation of LTL objectives. 

Our future effect is to investigate if the convergence result still holds as we apply value iteration and Q-learning. The challenge is that once the policy is updated, it may induce a new MC, which may have different rejecting BSCCs. The sufficient condition for the uniqueness of the Bellman equation may not hold during policy updates.

\bibliography{references}

\begin{thebibliography}{10}
\providecommand{\url}[1]{#1}
\csname url@samestyle\endcsname
\providecommand{\newblock}{\relax}
\providecommand{\bibinfo}[2]{#2}
\providecommand{\BIBentrySTDinterwordspacing}{\spaceskip=0pt\relax}
\providecommand{\BIBentryALTinterwordstretchfactor}{4}
\providecommand{\BIBentryALTinterwordspacing}{\spaceskip=\fontdimen2\font plus
\BIBentryALTinterwordstretchfactor\fontdimen3\font minus \fontdimen4\font\relax}
\providecommand{\BIBforeignlanguage}[2]{{%
\expandafter\ifx\csname l@#1\endcsname\relax
\typeout{** WARNING: IEEEtran.bst: No hyphenation pattern has been}%
\typeout{** loaded for the language `#1'. Using the pattern for}%
\typeout{** the default language instead.}%
\else
\language=\csname l@#1\endcsname
\fi
#2}}
\providecommand{\BIBdecl}{\relax}
\BIBdecl

\bibitem{pnueli1977temporala}
A.~Pnueli, ``The temporal logic of programs,'' in \emph{Annual {{Symposium}} on {{Foundations}} of {{Computer Science}}}, 1977.

\bibitem{baier2008principles}
C.~Baier and J.-P. Katoen, \emph{Principles of Model Checking}.\hskip 1em plus 0.5em minus 0.4em\relax {The MIT Press}, 2008.

\bibitem{fainekos2005temporal}
G.~Fainekos, H.~{Kress-Gazit}, and G.~Pappas, ``Temporal {{Logic Motion Planning}} for {{Mobile Robots}},'' in \emph{Proceedings of the 2005 {{IEEE International Conference}} on {{Robotics}} and {{Automation}}}.\hskip 1em plus 0.5em minus 0.4em\relax {IEEE}, 2005, pp. 2020--2025.

\bibitem{kressgazit2009temporallogicbased}
H.~{Kress-Gazit}, G.~E. Fainekos, and G.~J. Pappas, ``Temporal-logic-based reactive mission and motion planning,'' \emph{IEEE Transactions on Robotics}, vol.~25, no.~6, pp. 1370--1381, 2009.

\bibitem{sadigh2014learning}
D.~Sadigh, E.~S. Kim, S.~Coogan, S.~S. Sastry, and S.~A. Seshia, ``A learning based approach to control synthesis of {{Markov}} decision processes for linear temporal logic specifications,'' in \emph{53rd {{IEEE Conference}} on {{Decision}} and {{Control}}}, 2014, pp. 1091--1096.

\bibitem{hasanbeig2019reinforcement}
M.~Hasanbeig, Y.~Kantaros, A.~Abate, D.~Kroening, G.~J. Pappas, and I.~Lee, ``Reinforcement learning for temporal logic control synthesis with probabilistic satisfaction guarantees,'' in \emph{2019 {{IEEE}} 58th {{Conference}} on {{Decision}} and {{Control}} ({{CDC}})}, 2019, pp. 5338--5343.

\bibitem{hahn2020faithful}
E.~M. Hahn, M.~Perez, S.~Schewe, F.~Somenzi, A.~Trivedi, and D.~Wojtczak, ``Faithful and effective reward schemes for model-free reinforcement learning of omega-regular objectives,'' in \emph{Automated {{Technology}} for {{Verification}} and {{Analysis}}: 18th {{International Symposium}}, {{ATVA}} 2020, {{Hanoi}}, {{Vietnam}}, {{October}} 19{\textendash}23, 2020, {{Proceedings}}}.\hskip 1em plus 0.5em minus 0.4em\relax {Springer-Verlag}, 2020, pp. 108--124.

\bibitem{bozkurt2020control}
A.~K. Bozkurt, Y.~Wang, M.~M. Zavlanos, and M.~Pajic, ``Control {{Synthesis}} from {{Linear Temporal Logic Specifications}} using {{Model-Free Reinforcement Learning}},'' in \emph{2020 {{IEEE International Conference}} on {{Robotics}} and {{Automation}} ({{ICRA}})}, 2020, pp. 10\,349--10\,355.

\bibitem{ashok2019paca}
P.~Ashok, J.~K{\v r}et{\'i}nsk{\'y}, and M.~Weininger, ``{{PAC Statistical Model Checking}} for {{Markov Decision Processes}} and {{Stochastic Games}},'' in \emph{Computer {{Aided Verification}}}.\hskip 1em plus 0.5em minus 0.4em\relax {Springer International Publishing}, 2019, pp. 497--519.

\bibitem{voloshin2023eventual}
C.~Voloshin, A.~Verma, and Y.~Yue, ``Eventual {{Discounting Temporal Logic Counterfactual Experience Replay}},'' in \emph{Proceedings of the 40th {{International Conference}} on {{Machine Learning}}}.\hskip 1em plus 0.5em minus 0.4em\relax {PMLR}, 2023, pp. 35\,137--35\,150.

\bibitem{shao2023sample}
D.~Shao and M.~Kwiatkowska, ``Sample {{Efficient Model-free Reinforcement Learning}} from {{LTL Specifications}} with {{Optimality Guarantees}},'' in \emph{Thirty-{{Second International Joint Conference}} on {{Artificial Intelligence}}}, vol.~4, 2023, pp. 4180--4189.

\bibitem{cai2021modular}
M.~Cai, M.~Hasanbeig, S.~Xiao, A.~Abate, and Z.~Kan, ``Modular deep reinforcement learning for continuous motion planning with temporal logic,'' \emph{IEEE Robotics and Automation Letters}, vol.~6, no.~4, pp. 7973--7980, 2021.

\bibitem{hasanbeig2023certified}
H.~Hasanbeig, D.~Kroening, and A.~Abate, ``Certified reinforcement learning with logic guidance,'' \emph{Artificial Intelligence}, vol. 322, no.~C, 2023.

\bibitem{xuan2024uniqueness}
Z.~Xuan, A.~Bozkurt, M.~Pajic, and Y.~Wang, ``On the uniqueness of solution for the {{Bellman}} equation of {{LTL}} objectives,'' in \emph{Proceedings of the 6th {{Annual Learning}} for {{Dynamics}} \& {{Control Conference}}}.\hskip 1em plus 0.5em minus 0.4em\relax PMLR, Jun. 2024, pp. 428--439.

\bibitem{sickert2016limitdeterministic}
S.~Sickert, J.~Esparza, S.~Jaax, and J.~K{\v r}et{\'i}nsk{\'y}, ``Limit-{{Deterministic B{\"u}chi Automata}} for {{Linear Temporal Logic}},'' in \emph{Computer {{Aided Verification}}}, vol. 9780.\hskip 1em plus 0.5em minus 0.4em\relax {Springer International Publishing}, 2016, pp. 312--332.

\bibitem{cai2023safe}
M.~Cai, S.~Xiao, J.~Li, and Z.~Kan, ``Safe reinforcement learning under temporal logic with reward design and quantum action selection,'' \emph{Scientific Reports}, vol.~13, no.~1, p. 1925, 2023.

\bibitem{sutton2018reinforcement}
R.~S. Sutton and A.~G. Barto, \emph{Reinforcement Learning: An Introduction}, second edition~ed.\hskip 1em plus 0.5em minus 0.4em\relax {The MIT Press}, 2018.

\bibitem{hahn2019omegaregular}
E.~M. Hahn, M.~Perez, S.~Schewe, F.~Somenzi, A.~Trivedi, and D.~Wojtczak, ``Omega-regular objectives in model-free reinforcement learning,'' in \emph{Tools and {{Algorithms}} for the {{Construction}} and {{Analysis}} of {{Systems}}}.\hskip 1em plus 0.5em minus 0.4em\relax {Springer International Publishing}, 2019, pp. 395--412.

\end{thebibliography}
\end{document}